\DeclareMathOperator*{\argmax}{argmax}
\DeclareMathOperator*{\argmin}{argmin}
\DeclareMathOperator{\EX}{\mathbb{E}}%
\begin{document}

\title{Designing MacPherson Suspension Architectures using Bayesian Optimization\thanks{Supported by Ford through the Ford-KU Leuven Alliance.}
\thanks{Copyright \textcopyright 2019 for this paper by its authors. Use permitted under Creative Commons License Attribution 4.0 International (CC BY 4.0).}}
\titlerunning{Designing MacPherson Suspension Architectures}


\author{Sinnu Susan Thomas\inst{1}\and
Jacopo Palandri\inst{2} \and
Mohsen Lakehal-ayat\inst{2}\and
Punarjay Chakravarty\inst{3}\and
Friedrich Wolf-Monheim\inst{2}\and
Matthew B.\ Blaschko\inst{1}}
\authorrunning{Thomas et al.}

\institute{ESAT-PSI, KU Leuven, Kasteelpark Arenberg 10, 3001 Leuven, Belgium\\ 
\email{sinnu.thomas,matthew.blaschko@kuleuven.be}\\
\url{https://www.esat.kuleuven.be/psi} \and Ford Research \& Innovation Center, S{\"u}sterfeldstra{\ss}e 200, 52072 Aachen, Germany
\email{jpalandr, mlakehal,fwolf5@ford.com}\\
\url{https://www.ford.de/}\and Ford Greenfield Labs, 3251 Hillview Ave, Palo Alto, CA 94304, USA
\email{pchakra5@ford.com}\\
\url{https://www.ford.com/}}
\maketitle              
\begin{abstract}
Engineering design is traditionally performed by hand: an expert makes design proposals based on past experience, and these proposals are then tested for compliance with certain target specifications. Testing for compliance is performed first by computer simulation using what is called a discipline model. Such a model can be implemented by a finite element analysis, multibody systems approach, etc. Designs passing this simulation are then considered for physical prototyping. The overall process may take months, and is a significant cost in practice. We have developed a Bayesian optimization system for partially automating this process by directly optimizing compliance with the target specification with respect to the design parameters. The proposed method is a general framework for computing a generalized inverse of a high-dimensional non-linear function that does not require e.g.\ gradient information, which is often unavailable from discipline models.  We furthermore develop a two-tier convergence criterion based on (i) convergence to a solution optimally satisfying all specified design criteria, or (ii)  
convergence to a minimum-norm solution. We demonstrate the proposed approach on a vehicle chassis design problem motivated by an industry setting using a state-of-the-art commercial discipline model. We show that the proposed approach is general, scalable, and efficient, and that the novel convergence criteria can be implemented straightforwardly based on existing concepts and subroutines in popular Bayesian optimization software packages.

\keywords{Bayesian Optimization  \and Suspension Design \and ADAMS MSC Car.}
\end{abstract}

\section{Introduction}
The handling stability, running quality, and vehicle ride comfort is the criteria used by the passenger for assessing on-road vehicles. The handling attributes are determined by the way the forces at the tire contact patch influence the vehicle dynamics.
The suspension is a complex multi-body system that connects and transmits the force between the vehicle body and the wheel. 
The forces can be due to a driver command such as a steering wheel, gas pedal or brake input, but also to road surface unevenness, aerodynamic forces, vibrations of the engine and drive-line, and non-uniformity of the tire/wheel assembly. 

The desired behavior of the vehicle can be achieved by optimizing the design of the front suspension, which comprises a large number of design variables. One of the most popular suspension systems is the MacPherson strut \cite{macpherson1953vehicle,macpherson1953wheel}, which is deployed in a large number of vehicles for its simple structure, good overall performance, package efficiency and relatively low cost as shown in Fig.\ref{suspension model}.
\begin{figure}
    \centering
    \includegraphics[trim={6cm 4cm 6cm 2.2cm},width=6.8cm,height=4.2cm]{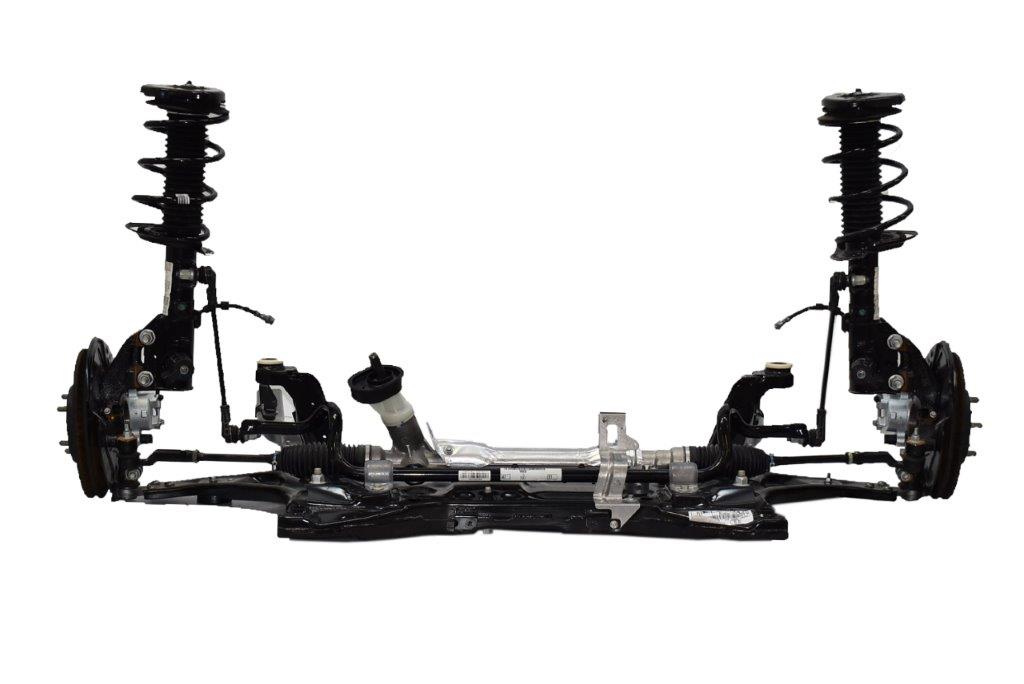}
    \caption{MacPherson Suspension Design \cite{macpherson1953vehicle,macpherson1953wheel}}
    \label{suspension model}
\end{figure}

In this work, we develop a Bayesian optimization strategy for determining parameters of a MacPherson suspension system that optimizes desired performance characteristics of an automobile.  In contrast to the industry standard of hand designed parameters, we aim to achieve partial automation of the design process by employing a multibody dynamics simulation model and allowing the Bayesian optimization to iteratively explore the design space without human intervention.

\subsection{Applications of Bayesian Optimization}
Inverse problems are generally the problem of finding the properties of the model using indirect measurements such as finding an inverse shortest path and inverse minimum spanning tree problems \cite{Yang1999AML}. Vito et al.\ \cite{Vito2005JML} formulated the problem of finding the best solution as a linear inverse problem given some loss function and hypothesis space. This problem can also be solved using the regularized least squares algorithm. These types of problems are ill-posed problems \cite{Ye2019MA}. They are often solved using Bayesian approaches. In the proposed approach, the design of the suspension of the vehicle is framed as the inverse problem and the inverse problem is solved using Bayesian Optimization.

Bayesian Optimization (BO) creates a surrogate model of the black-box objective function and finds the optimum of an unknown objective function from which samples can be obtained \cite{Mockus:1974:BMS:646296.687872,BrochuarXiv2010}. BO has been studied in many fields for the optimization of unknown functions. 
The tuning of the hyper-parameters,  regularization terms, and optimization parameters in machine learning needs a lot of expert experience, but with the help of the tractable posterior  distribution  induced  by  the  Gaussian Processes \cite{Snoek2012NIPS} leads  to  the efficient  use  of the information gathered from the past experiences, enabling optimal choices for the next parameters. 
The performance of the BO is highly dependent on the choice of acquisition function made to find the best-observed value.  \cite{pmlr-v77-nguyen17a,Ma2019a} have proposed convergence criteria for several acquisition functions in order to avoid unwanted evaluations. 

Several methods exist in the literature for designing suspension architectures. Yu and Yu \cite{YuPSU2003}  formulated the problem of optimal vehicle suspension design with the quarter-car vehicle dynamic model using a genetic algorithm (GA). 
The two objectives of the optimization are to minimize the average suspension displacement and to minimize the maximum bouncing acceleration of the sprung mass \cite{Blundell2014} restricted to a number of constraints. An objective function is formulated as the sum of absolute magnitude of sprung mass  (chassis) acceleration and absolute magnitude of average sprung mass displacement. The authors have considered some constraints in the problem formulation such as the maximum amplitude of sprung mass acceleration should not be less than 1g (9.8m/$s^2$), human beings feel motion sickness when vibration at a natural frequency is less than 1Hz, and the absolute value of the relative displacement between the sprung and unsprung mass should be less than 13 cm. Zhang et al.\ \cite{ZhangESDA2006} proposed the design of suspension parameters of flexible multibody vehicle model using the ADAMS software and GA. A flexible multibody model having 44 degrees of freedom  is compared with the multi-rigid bodies having 33 degrees of freedom. ADAMS is a tool used by the leading manufacturers to design parameters for complex design and optimization. ADAMS and GA carries the simulation analysis until the algorithm converges. Yao et al.\ \cite{YaoAMM2013} modeled a sedan MacPherson front suspension with ADAMS and studied the relationship between the wheel alignment parameters and tire wear when wheel hops.
Afkar et al.\ \cite{AfkarJV2012} modeled a double wishbone suspension system using ADAMS and then optimized geometric parameters using a GA. Similarly, the authors in \cite{QianICICA2012} optimized several hard points of the front suspension using the INSIGHT module of ADAMS to optimize toe and camber angle characteristics. Dye and Lankarani \cite{DyeIDETC2016} used a neural network to fit tire test data under varying tire pressures and steady state conditions. Hurel et al.\ \cite{HurelAMC2012} developed a non-linear two-dimensional mathematical quarter-car model of MacPherson suspension. This model considered both the vertical motion of the chassis and the rotation and translation for the unsprung mass (wheel assembly). Zhang et al.\ \cite{ZhangICVMEE2016} studied the geometric relationship between the structural parameter of the double wishbone front suspension and alignment parameters of the steering wheel. Liu et al.\ \cite{LiuRJASET2013} studied the hardpoints of the double wishbone suspension of a Formula SAE car using correlation theory. Tao et al.\ \cite{TaoIDETC2017} used a Gaussian process modeling method to design vehicle suspension parameters using a multi-disciplinary optimization architecture, and experiments were carried out for a front MacPherson suspension and a rear strut suspension system on a Altair Motion View vehicle model. The Gaussian process models are fitted with multidisciplinary optimization and efficiencies are computed.

In automobile manufacturing, whether ride and handling characteristics have been achieved is quantified through statistics of a kinematic curve. The design process can be time intensive as an engineer has to manually input a particular set of hard-points, pass it through the simulation tool (which can take several minutes to process), observe the kinematic curves, and then repeat the whole process to refine the design. Thus, generating one design that meets the desired kinematic constraints in current standard practice can take multiple weeks and consume significant manpower. 

The main novel contributions of this paper are as follows:
\begin{itemize}
    \item Bayesian optimization is applied to the design of the front suspension of a vehicle.
    \item Finding the geometric locations of the hardpoints of the suspension from the desired target characteristics is formulated as a generalized-inverse problem.
    \item We develop novel convergence criteria in the context of the generalized-inverse problem. 
    \item We verify the validity of the approach in 
    several
    suspension design settings, empirically validating the predicted convergence characteristics.
\end{itemize} 

\section{Suspension Design using Bayesian Optimization}
\subsection{Bayesian Optimization}
BO is a derivative free optimization approach for global optimization of an expensive black-box function $f$. It is a class of sequential-model based optimization algorithms that uses past evaluations of the function to find the next point to sample.  For a minimization problem,
\begin{equation}
        \mathbf{y}^*\; =\;\arg\min_{\mathbf{y}\in \mathcal{Y}}\;f(\mathbf{y}),
    \end{equation}
if $f$ is expensive, then there is a strong need to reduce the number of evaluations. Since the objective function is unknown, according to Bayesian theory the function is treated as random and a prior belief is placed over the function values, and as more values are observed, the posterior is updated based on the observation likelihood.  
These models use different types of acquisition functions to determine the next point to be evaluated based on the current posterior distribution over functions. 

The surrogate model used for this optimization is a Gaussian Processes (GP). A GP is characterized by its mean $\mu(y)$, and co-variance function $k (y,y^{'})$. For $n$ data points, the function $f_{1;n}\;=\; f (y_{1}), \;\ldots,\; f (y_{n})$ can be characterized using a multivariate Gaussian distribution as
\begin{equation}
    f_{1:n}\;=\; \mathcal{N}\;(\mu\;(\mathbf{y}_{1:n}),\;\mathbf{K}),
\end{equation}
where $\mathbf{K}$ is a $n\times n$ kernel matrix given by
\begin{equation}
\mathbf{K}\;=\; \begin{pmatrix}
k(\mathbf{y}_1,\mathbf{y}_1) & \;\ldots & \; k(\mathbf{y}_1,\mathbf{y}_n) \\
\;\vdots & \;\ddots & \;\vdots\\
\;k(\mathbf{y}_n,\mathbf{y}_1) & \;\ldots & \;k(\mathbf{y}_n,\mathbf{y}_n)
\end{pmatrix},
\end{equation}
for some positive definite kernel such as a Gaussian or Matern kernel \cite{Rasmussen:2005:GPM:1162254}.

An acquisition function proposes which points should be selected next to determine the minimizer of the function, and they trade off between exploitation and exploration. Exploitation means the acquisition function selects points where the mean of the GP is low and exploration means the acquisition function selects points where the variance of the GP is high. Various acquisition functions are used for Bayesian models, including maximum probability of improvement (MPI), expected improvement (EI), confidence bounds (CB) criteria.

Maximum Probability of Improvement evaluates $f (y)$ at the point most likely to improve than the minimal value of $f (y^{+})$ observed so far \cite{BrochuarXiv2010}
\begin{align} 
MPI(\mathbf{y})\; &=\; P(f(\mathbf{y})\;\geq\; f(\mathbf{y}^{+})\; +\; \xi), \nonumber \\
 &=\;  \Phi \left( \frac{\mu(\mathbf{y})\;-\; f(\mathbf{y}^{+})\;-\;\xi}{\sigma(\mathbf{y})} \right),
\end{align}
where $f(y^{+})$ is the best observed value of the function so far, $\mu(\mathbf{y})$ is the posterior mean of $\mathbf{y}$ under the GP, $\sigma(\mathbf{y})$ is the posterior standard deviation, and $\xi$ is a parameter used to drive exploration, which is usually very small. 
$\Phi$ is the cumulative distribution function of a standard normal variable.

Expected Improvement evaluates $f(y)$ at the point where the expectation of the improvement in $f(y^{+})$ under the current estimate of the GP is the highest \cite{PradeepSDSU}
\begin{align} 
EI(\mathbf{y})\; &=\; \EX\;[\max\{0\;,\; f(\mathbf{y}^{+}) \;-\;f(\mathbf{y})\}], \nonumber \\
 &=\;  (-\mu(\mathbf{y})\;+\; f(\mathbf{\mathbf{y}}^{+})\;+\;\xi)\; \Phi \left( \frac{-\mu(\mathbf{y})\;+\; f(\mathbf{y}^{+})\;+\;\xi}{\sigma(\mathbf{y})} \right)\; + \nonumber \\
 &\sigma(\mathbf{y})\; \phi \left( \frac{-\mu(\mathbf{y})\;+\; f(\mathbf{y}^{+})\;+\;\xi}{\sigma(\mathbf{y})} \right) , \label{eq:ExpectedImprovementAcquisition}
\end{align}
where $\phi$ is the probability density function of a standard normal variable.

The basic algorithm of Bayesian Optimization \cite{NoearXiv2018} is given in Algorithm \ref{alg1}.
\begin{algorithm}[tb]
   \caption{Bayesian Optimization}
   \label{alg1}
\begin{algorithmic}
   \STATE {\bfseries Initial Design:} \\ \hspace{1.5em}$\mathbb{D}_{n_{init}}$= $\{(\mathbf{y}_i, x_i)\}_{i=1}^{n_{init}}$,\\
   \hspace{1.5em}$n_{max}$ function evaluation
   \STATE {\bfseries Results:} \\
   \hspace{1.5em}Estimated min:$f_{min} = \min(x_1,\ldots, x_{n_{max}})$.
   \STATE \hspace{1.3em} Estimated minimum point:
   \begin{equation}
       \mathbf{y}_{min} =  {\mathrm{argmin}}(x_1,\ldots,x_{n_{max}}).
   \end{equation}
   \FOR{$n = n_{init}$ {\bfseries to} $n_{max}$}
   \STATE Update GP: $f(\mathbf{y}) | \mathbb{D}_n \sim GP(\hat{f}(\mathbf{y}), \mathbb{K}(\mathbf{y},\mathbf{y^'}))$,
   \STATE Optimize acquisition function: \[\mathbf{y}_{next} = \argmax_{\mathbf{y} \in \mathcal{Y}} \alpha_n(\mathbf{y}).\]
   \STATE Find $f$ at $\mathbf{y_{next}}$ to obtain $x_{next}$.
   \STATE Add data to previous design: $\mathbb{D}_{n+1} = \mathbb{D}_n \cup \{\mathbf{y}_{next},x_{next}\}$.
   \ENDFOR
\end{algorithmic}
\end{algorithm}

\subsection{Generalized Inverse Problems using Bayesian Optimization}
Automotive engineers use mechanical discipline models to design the suspension of the vehicle, which is computationally expensive and may involve a design cycle spanning several days or weeks. Operating characteristics of the vehicle suspension components depend upon the suspension architecture, selection of components and the locations at which the components are attached to each other and the vehicle body. The suspension geometry includes the arrangement of suspension components and the location of hard-points, which are locations at which suspension components are attached to the vehicle's body. For a given set of vehicle suspension components, the suspension geometry will determine the kinematic characteristics of the vehicle suspension which  includes wheel attitude and suspension travel. Kinematic characteristics are traditionally derived from a multi-body dynamic modeling/simulation of an entire suspension system subjected to spindle input loads such as may be performed by commercially available multi-body dynamic (MBD) software. Given a set of hard-point locations and corresponding kinematic curves we learn the relationship between the hard-point locations and kinematic curves using BO. This work predicts a set of kinematic curves corresponding to hard-points for MacPherson strut suspension. Predicting the kinematic curves corresponding to hard-points can reduce the amount of time required to select hard-points for a MacPherson strut suspension design process from multiple weeks to less than one day.

Statistics of kinematic curves of the vehicle form our representation of the desired vehicle characteristics. We denote the set of possible statistics as $\mathcal{X}$ where $\mathcal{X} \subseteq \mathbb{R}^d$.  This paper addresses the design of suspension hardpoints $\mathcal{Y}$ where $\mathcal{Y} \subset \mathbb{R}^m$ is a bounded domain for a given target $\mathbf{x} \in \mathcal{X}$. It is a well defined task for the engineers to design target characteristics $\mathcal{X}$ from the suspension design parameters $\mathcal{Y}$. 
Consider the equation   
\begin{equation}
    \mathbf{x}\;=\; g(\mathbf{y}).
\end{equation}
We formulated this problem as an inverse problem for $g : \mathcal{Y} \rightarrow \mathcal{X} $, find a value $\mathbf{y}$ such that $g(\mathbf{y})\approx \mathbf{x}$. 

In general the above problem is ill-posed: there may be multiple $\mathbf{y}$ satisfying $g(\mathbf{y})=\mathbf{x}$ for a given $\mathbf{x}$ or none. The existence  of the solution can be restored by solving for the minimum norm solution 
\begin{equation}
  g^{\dagger}(\mathbf{x})\;:=\; \arg\min_{\mathbf{y} \in \mathcal{Y}}\; \underbrace{\|g(\mathbf{y})-\mathbf{x}\|^2}_{=:f(\mathbf{y})} , \label{eq12}
\end{equation}
where $g^{\dagger}$ defines a generalized inverse of $g$ \cite{ben2006generalized}.
We optimize the desired suspension design parameters using Bayesian Optimization (BO) where $g$ is set to be computed by the multi-body dynamic software MSC ADAMS \cite{MSC-ADAMS}, and Bayesian optimization is used to minimize $f$ with respect to $\mathbf{y}$.

\subsection{Convergence criteria}

It is well known that Bayesian optimization converges under fairly general conditions to a global optimizer over a bounded domain \cite{pmlr-v77-nguyen17a,BrochuarXiv2010,Bull:2011:CRE:1953048.2078198}.
That the algorithm will converge towards an optimum indicates that, when the requested design characteristics $\mathbf{x}$ are feasible, i.e.\ 
\begin{align}
    \arg\min_{\mathbf{y}\in\mathcal{Y}} \;\|g(\mathbf{y}) - \mathbf{x}\|^2 \; =\; 0 ,
\end{align}
we may simply specify a convergence criterion that terminates when
\begin{equation}
   \|g(\mathbf{y}^+) - \mathbf{x}\|^2\; <\; \varepsilon, 
\end{equation}
for some prespecified tolerance $\varepsilon\;>\;0$.  However, it may not be the case that the requested design characteristics $\mathbf{x}$ lie in the image of $\mathcal{Y}$ under $g$.  Consider the example where passenger comfort and acceleration must simultaneously be unrealistically high: such constraints can contradict each other.  In such a case it may be that for a given tolerance, 
\begin{align}
\arg\min_{\mathbf{y}\in\mathcal{Y}} \;\|g(\mathbf{y}) - \mathbf{x}\|^2\; >\;\varepsilon,
\end{align}
and the optimization will never terminate.  We therefore develop an additional termination criterion in the sequel.

Bayesian optimization is typically presented without termination criteria, or with the assumption that there is a fixed budget of optimization iterations \cite{DBLP:journals/corr/abs-1807-02811}.  However, two main optimization dependent termination strategies have been proposed: (i) thresholding the acquisition function \cite{pmlr-v77-nguyen17a,Ma2019a}, and (ii) terminating when two sequential parameter vectors differ by a small amount \cite{lorenz2015stopping}.  We consider here the strategy of thresholding the acquisition function, as this has been analyzed from the perspective of regret minimization, which implies a bound on the convergence rate of the optimization \cite{pmlr-v77-nguyen17a,SrinivasarXiv2009}.

\begin{theorem}[\cite{Rasmussen:2005:GPM:1162254}, Sec.~2.9]\label{thm:ConvergenceofGP}
Let $\sigma_n(f(y))$ denote the predictive variance of a Gaussian process regression model at $y$ given a dataset of size $n$.  The predictive variance of using a dataset of only the first $n-1$ training points is denoted $\sigma_{n-1}(f(y))$:
\begin{equation}
\sigma_n(f(y))\; \leq\; \sigma_{n-1}(f(y)).
\end{equation}
\end{theorem}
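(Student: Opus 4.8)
The plan is to prove the inequality probabilistically via the law of total variance, exploiting the defining feature of Gaussian process regression: the posterior (predictive) variance at any test point $y$ depends only on the \emph{locations} of the training inputs, not on the observed response values. Writing $\mathbb{D}_n$ for the dataset consisting of the first $n$ input--output pairs, the quantity I will denote $\sigma_n^2(f(y))$ is exactly the conditional variance $\mathrm{Var}\!\left(f(y)\mid \mathbb{D}_n\right)$ of the latent process at $y$, and the key point is that this conditional variance is a deterministic function of the inputs alone.

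First I would invoke the standard law of total variance, relating the coarser conditioning on $\mathbb{D}_{n-1}$ to the finer conditioning on $\mathbb{D}_n \supset \mathbb{D}_{n-1}$:
\begin{equation}
\mathrm{Var}\!\left(f(y)\mid \mathbb{D}_{n-1}\right) \;=\; \EX\!\left[\mathrm{Var}\!\left(f(y)\mid \mathbb{D}_{n}\right)\,\middle|\,\mathbb{D}_{n-1}\right] \;+\; \mathrm{Var}\!\left(\EX\!\left[f(y)\mid \mathbb{D}_{n}\right]\,\middle|\,\mathbb{D}_{n-1}\right).
\end{equation}
Second, I would apply the deterministic-variance property: because $\mathrm{Var}(f(y)\mid\mathbb{D}_n)=\sigma_n^2(f(y))$ does not depend on the (random) $n$-th observation, the outer expectation in the first term collapses to $\sigma_n^2(f(y))$. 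Substituting this in leaves
\begin{equation}
\sigma_{n-1}^2(f(y)) \;=\; \sigma_n^2(f(y)) \;+\; \mathrm{Var}\!\left(\EX\!\left[f(y)\mid \mathbb{D}_{n}\right]\,\middle|\,\mathbb{D}_{n-1}\right) \;\geq\; \sigma_n^2(f(y)),
\end{equation}
since the remaining variance term is non-negative. As both sides are non-negative and the square root is monotone, this yields the stated inequality $\sigma_n(f(y))\leq\sigma_{n-1}(f(y))$ under either the variance or standard-deviation reading of the notation.

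The step I expect to require the most care is the second one: rigorously justifying that the predictive variance is independent of the observed targets. This follows from the closed-form posterior of a jointly Gaussian vector, where the conditional covariance of $f(y)$ given the responses is $k(y,y)-\mathbf{k}_{n}(y)^\top (K_n+\sigma^2 I)^{-1}\mathbf{k}_n(y)$, a function of the kernel evaluated at the input locations only. As a purely algebraic alternative that sidesteps the probabilistic argument, I would instead partition $K_n$ into a block form isolating the $n$-th training point and apply the block-matrix (Schur complement) inversion identity; this expresses $\sigma_n^2(f(y))$ as $\sigma_{n-1}^2(f(y))$ minus a term of the form $(\text{scalar})^2/s$, where $s>0$ is the Schur complement, positive by positive-definiteness of $K_n$. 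Either route makes the monotone decrease transparent, but the law-of-total-variance argument is the shortest and most illuminating, and it exposes the residual gap as the variance of the updated posterior mean.
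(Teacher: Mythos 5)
Your proof is correct, but be aware that the paper itself offers no proof of this statement: it is imported as a black box from Rasmussen and Williams, Sec.~2.9, where the monotone decrease of the predictive variance is in fact left as an exercise, and the standard solution to that exercise is exactly the Schur-complement computation you sketch as your ``purely algebraic alternative.'' Your primary route via the law of total variance is therefore a genuinely independent (and arguably more illuminating) argument: it isolates the one property that makes the inequality hold \emph{pointwise} rather than merely in expectation, namely that for a jointly Gaussian vector the conditional covariance is a deterministic function of the input locations and not of the observed responses, and it identifies the gap $\sigma_{n-1}^2(f(y))-\sigma_n^2(f(y))$ as the variance of the updated posterior mean. One small point deserves to be made explicit: for the outer expectation $\EX[\mathrm{Var}(f(y)\mid\mathbb{D}_n)\mid\mathbb{D}_{n-1}]$ to collapse to the constant $\sigma_n^2(f(y))$, it is not enough that the conditional variance be independent of the $n$-th \emph{response}; it also depends on the $n$-th \emph{input} $y_n$, so you need $y_n$ to be determined by $\mathbb{D}_{n-1}$ (or fixed in advance). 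This holds in the fixed-design regression setting of the cited textbook and also in the paper's Bayesian-optimization loop, where $y_n$ is the argmax of an acquisition function computed from $\mathbb{D}_{n-1}$, but it is the one hypothesis your probabilistic argument silently uses that the algebraic Schur-complement argument does not. Your observation that both the variance and standard-deviation readings of the (ambiguous) notation $\sigma_n(f(y))$ are covered by monotonicity of the square root is also worth keeping, and your remark that strict positivity of the Schur complement yields strict decrease matches the unproved sentence the paper appends immediately after the theorem.
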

Furthermore, for a covariance function that is strictly positive definite (i.e.\ the rank of the covariance matrix is equal to its size), the variance will be strictly decreasing over a bounded domain $\mathcal{Y}$.

It is well known that Bayesian optimization with expected improvement will converge to an optimum
\cite[Theorem~2]{Bull:2011:CRE:1953048.2078198}.  This leads us to the following proposition:
\begin{proposition}[Termination of Expected Improvement]
 The following convergence criterion applied to Algorithm~\ref{alg1} will terminate: 
\begin{align}
    \max_{\mathbf{y}\in\mathcal{Y}} \;EI(\mathbf{y})\; \leq\; \varepsilon ,
\end{align}
for a fixed $\varepsilon\;>\;0$.
\end{proposition}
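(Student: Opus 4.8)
The plan is to show that the maximum of the expected improvement acquisition function decays below any fixed positive threshold after finitely many iterations. First I would rewrite \eqref{eq:ExpectedImprovementAcquisition} in the compact form $EI(\mathbf{y}) = \sigma_n(\mathbf{y})\,h(z_n(\mathbf{y}))$, where $z_n(\mathbf{y}) = \Delta_n(\mathbf{y})/\sigma_n(\mathbf{y})$, $\Delta_n(\mathbf{y}) = f(\mathbf{y}^+) + \xi - \mu_n(\mathbf{y})$, and $h(z) = z\Phi(z) + \phi(z)$; here the subscript $n$ records the dependence on the current GP posterior after $n$ evaluations. Using $\Phi \leq 1$ together with the elementary inequality $z\Phi(z) \leq \max\{0,z\}$ and the bound $\phi \leq 1/\sqrt{2\pi}$, this yields the estimate
\begin{equation}
EI(\mathbf{y}) \;\leq\; \max\{0,\, \Delta_n(\mathbf{y})\} \;+\; \tfrac{1}{\sqrt{2\pi}}\,\sigma_n(\mathbf{y}),
\end{equation}
which decouples the acquisition into a mean-improvement term and a variance term that I can control separately.

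Next I would bound the variance term. By Theorem~\ref{thm:ConvergenceofGP} the predictive variance is monotonically non-increasing in $n$, and for a strictly positive definite kernel it is strictly decreasing over the bounded domain $\mathcal{Y}$. As the design fills $\mathcal{Y}$, the posterior variance is driven to zero, so $\max_{\mathbf{y}\in\mathcal{Y}} \sigma_n(\mathbf{y}) \to 0$ and, consequently, the posterior mean concentrates on the truth, $\mu_n \to f$ uniformly on $\mathcal{Y}$. For the improvement term I would invoke the convergence of expected-improvement Bayesian optimization \cite[Theorem~2]{Bull:2011:CRE:1953048.2078198}, which guarantees $f(\mathbf{y}^+) \to \min_{\mathbf{y}\in\mathcal{Y}} f(\mathbf{y})$. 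Combining these two facts gives $\sup_{\mathbf{y}\in\mathcal{Y}} \Delta_n(\mathbf{y}) = f(\mathbf{y}^+) + \xi - \inf_{\mathbf{y}\in\mathcal{Y}} \mu_n(\mathbf{y}) \to \xi$.

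Putting the pieces together, the displayed estimate yields $\limsup_{n\to\infty} \max_{\mathbf{y}\in\mathcal{Y}} EI(\mathbf{y}) \leq \xi$. Since $\xi$ is taken to be very small (in particular smaller than the prescribed tolerance, and zero in the noiseless analysis), there exists a finite $n$ for which $\max_{\mathbf{y}\in\mathcal{Y}} EI(\mathbf{y}) \leq \varepsilon$, so the stated criterion is met and the loop in Algorithm~\ref{alg1} halts.

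I expect the main obstacle to be the claim $\max_{\mathbf{y}\in\mathcal{Y}} \sigma_n(\mathbf{y}) \to 0$: Theorem~\ref{thm:ConvergenceofGP} only delivers monotone (strict) decrease, whereas convergence to zero requires the evaluated points to become dense in $\mathcal{Y}$, i.e.\ a space-filling exploration guarantee rather than mere monotonicity. This must be borrowed from the exploration mechanism underpinning the convergence result of \cite{Bull:2011:CRE:1953048.2078198}. A secondary subtlety is the exploration offset $\xi$: because the limiting value of the acquisition maximum is $\xi$ rather than $0$, termination requires $\varepsilon > \xi$, which is consistent with the standing assumption that $\xi$ is very small.
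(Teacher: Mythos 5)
Your proposal follows essentially the same route as the paper's proof: both split the expected improvement \eqref{eq:ExpectedImprovementAcquisition} into a mean-improvement part controlled by \cite[Theorem~2]{Bull:2011:CRE:1953048.2078198} and a variance part controlled by Theorem~\ref{thm:ConvergenceofGP}, using the boundedness of $\Phi$ and $\phi$. The difference is one of rigor rather than strategy. The paper's proof stops at the qualitative observations that $-\mu(\mathbf{y})+f(\mathbf{y}^+)+\xi$ is ``decreasing'' and that $\sigma(\mathbf{y})$ is ``decreasing,'' and then asserts that the result follows; your version makes the decoupling explicit via the bound $EI(\mathbf{y}) \leq \max\{0,\Delta_n(\mathbf{y})\} + \sigma_n(\mathbf{y})/\sqrt{2\pi}$ and tracks the actual limits of each term. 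In doing so you correctly expose the two points at which the published argument is incomplete: (i) monotone (even strictly) decreasing predictive variance does not by itself yield $\max_{\mathbf{y}\in\mathcal{Y}}\sigma_n(\mathbf{y})\to 0$ --- one needs the evaluated points to become dense in $\mathcal{Y}$, which must be imported from the exploration guarantee underlying Bull's convergence result rather than from Theorem~\ref{thm:ConvergenceofGP}; and (ii) the acquisition maximum tends to $\xi$, not to $0$, so termination is only guaranteed for $\varepsilon > \xi$. Neither caveat is acknowledged in the paper, so your proposal is the more complete account of why (and under what conditions) the criterion actually fires.
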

\begin{proof}
From \cite[Theorem~2]{Bull:2011:CRE:1953048.2078198}, we have that 
\begin{align}
f(\mathbf{y}^+)\; -\; \arg\min_{\mathbf{y}\in \mathcal{Y}}\; f(\mathbf{y}),
\end{align}
is in expectation decreasing in the number of iterations, meaning that 
\begin{equation}
    -\mu(\mathbf(y))\;+\;f(\mathbf{y}^+)\; +\; \xi,
\end{equation}
is decreasing.  Furthermore, inspecting the formula for $EI(\mathbf{y})$, we see that $\Phi$ and $\phi$ are bounded.  From Theorem~\ref{thm:ConvergenceofGP}, we have that $\sigma(\mathbf{y})$ is decreasing, and the result follows. \hfill \qed
\end{proof}

When the maximum expected improvement is small, by definition, we believe under the posterior GP model that the solution will not improve by a large amount when observing a new point, 
and we are close to a minimum norm solution.  Furthermore, due to the quantification of uncertainty in the Gaussian process model, we may set a threshold on the acquisition function such that this second termination criterion  becomes active when it is highly probable that no solution with near-zero norm exists.

\section{Case Studies}

We implemented the inverse problem of finding the suspension design using BO \cite{gpyopt2016} in Python 3.7.3. The LBGFS algorithm with multiple restarts was used for optimizing the acquisition function. All the experiments were conducted on a workstation with two Intel Xeon CPUs and 64G memory. MSC ADAMS 18.1 was used as the discipline model, and takes several minutes per call.  
The problem becomes
\begin{equation}
    \mathbf{y}^*\; =\;  \argmin_{\mathbf{y} \in \mathcal{Y}} \;\|ADAMS(\mathbf{y})\;-\;\mathbf{x}\|^2,
\end{equation}
for a desired set of vehicle characteristics $\mathbf{x}$.
We optimize the positions of Inner Tie Rod Ball  and Outer Tie Rod Ball of the MacPherson front suspension architecture. 
We compare the convergence of Bayesian optimization using acquisition function EI to that of the commercial HEEDS Sherpa optimization software, which is commonly used in automotive design \cite{chase2012new,locci2019acoustic}.  

A classic suspension design problem is the tuning of the bump and roll steer performance, which is strictly related to the placement of the tie rod. Bump steer is a measure of the suspension toe angle with respect to suspension vertical travel and is measured in deg/m.  In roll steer the wheel travel is generated by applying a roll angle to the vehicle body. Tie rod connects the steering to the steering knuckle on each front wheel. A tie rod is made of two components: inner and outer tie rod ends. This problem is relevant as the toe attitude of the wheel is a function of its vertical motion and is related to crucial vehicle attributes such as vehicle stability, cornering performance, and steering feel. Indeed, since the position of the outer tie rod defines the lever arm to the kingpin axis, the outer tie rod location determines how the forces are transmitted from the contact patch through the suspension and ultimately to the driver operating the steering wheel. 

\subsection{Optimization of one hardpoint}

We optimized the position of Outer Tie Rod Ball as the first case study. This hardpoint has large effect on the performance of the bump steer and roll steer. We show the result of two targets such as bump steer and roll steer from the entire set of target characteristics. We achieved the convergence criterion of \begin{equation}
     \|ADAMS(\mathbf{y})-\mathbf{x}\|^2 \;\leq\; 0.001
\end{equation} in 210 iterations. 
The proposed design of a single hardpoint outperforms the design of the HEEDS software (Fig. \ref{Convergenceall1HP}).  The acquisition function converges as shown in Fig.~\ref{AcquisitionFunctionfor1HP}, and the norm per iteration is shown in Figs.~\ref{ConvergenceHEEDS1HP}-\ref{ConvergenceBO1HP}. We have also compared to a numerical gradient based optimization method using the MATLAB fmincon function (Fig. \ref{Convergencefmincon1HP}). Fig.~\ref{Convergenceall1HP} shows that the BO approach converges more quickly than the competing techniques.

Figs.~\ref{HEEDSScatterplotouter1HP}-\ref{BOScatterplotouter1HP}, show the points in the design space explored using HEEDS, fmincon, and Bayesian optimization. The axes of the scatter plots are normalized in the range $[0,1]$ with respect to the input domain of the hardpoints.

Fig.~\ref{Bumpsteer1HP} 
show the convergence of the optimized target characteristics towards the desired bump steer using HEEDS, fmincon, and Bayesian optimization. The optimized targets are normalized such that the target performance is labeled as zero and is shown with a red line. All three methods converge to similar targets, but fmincon converges with a non-zero error.
\begin{figure}[htp]
 \hspace*{-3.5em}
\begin{minipage}{.5\textwidth}
  \centering
  \includegraphics[width=5.8cm,height=4.2cm]{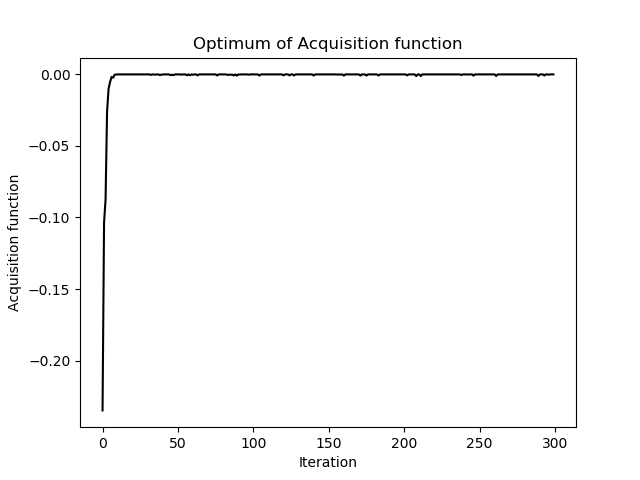}
  \captionof{figure}{Acquisition Function for 1 HP.}
  \label{AcquisitionFunctionfor1HP}
\end{minipage}%
\begin{minipage}{.5\textwidth}
  \centering
  \includegraphics[width=5.8cm,height=4.2cm]{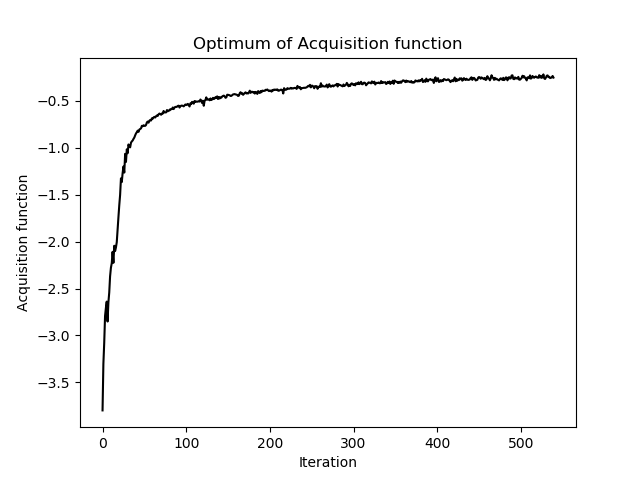}
  \captionof{figure}{Acquisition Function for 2 HP.}
  \label{AcquisitionFunctionfor2HP}
\end{minipage}
 \hspace*{-3.5em}
\begin{minipage}{.5\textwidth}
  \centering
  \includegraphics[width=5.8cm,height=4.2cm]{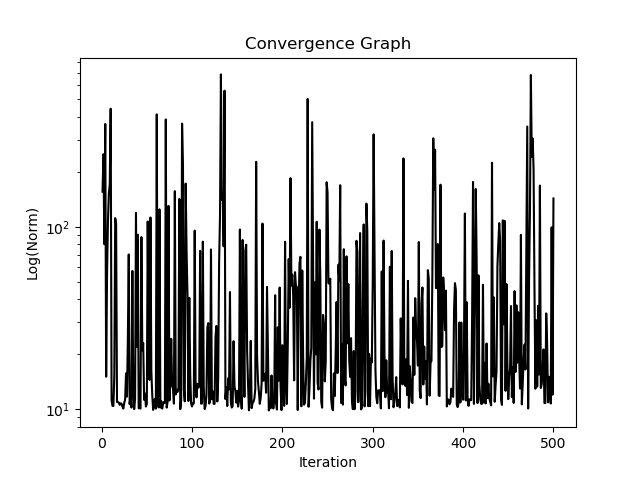}
  \captionof{figure}{Convergence (HEEDS) 1HP.}
  \label{ConvergenceHEEDS1HP}
\end{minipage}%
\begin{minipage}{.5\textwidth}
  \centering
  \includegraphics[width=5.8cm,height=4.2cm]{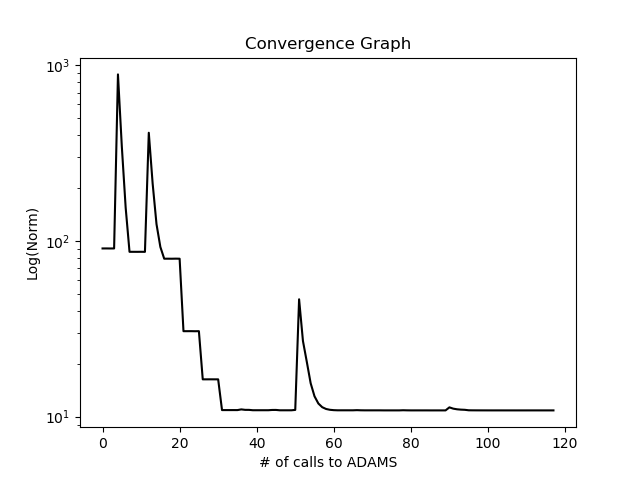}
  \captionof{figure}{Convergence (fmincon) 1HP}
  \label{Convergencefmincon1HP}
\end{minipage}
\begin{minipage}{.5\textwidth}
  \centering
  \includegraphics[width=5.8cm,height=4.2cm]{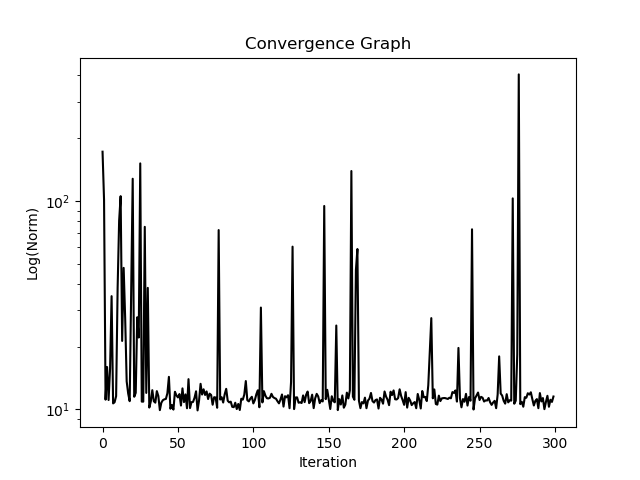}
  \captionof{figure}{Convergence (BO) 1HP.}
  \label{ConvergenceBO1HP}
\end{minipage}%
\begin{minipage}{.5\textwidth}
  \centering
  \includegraphics[width=5.8cm,height=4.2cm]{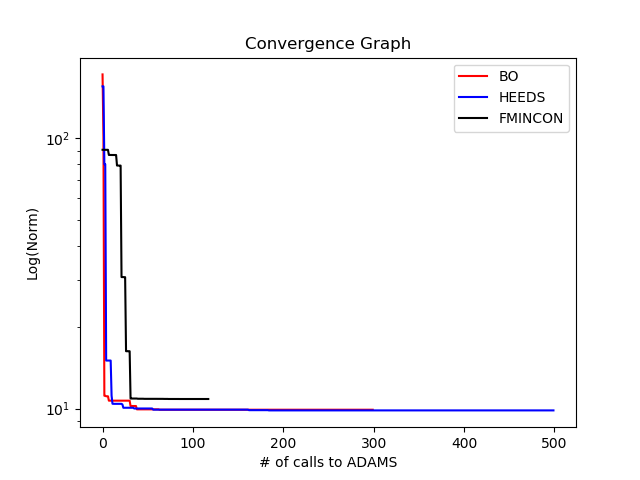}
  \captionof{figure}{Comparison of different convergence plots in 1HP case.}
  \label{Convergenceall1HP}
\end{minipage}
\hspace*{-5.1em}
\subfigure{
\includegraphics[width=4.9cm,height=4.2cm]{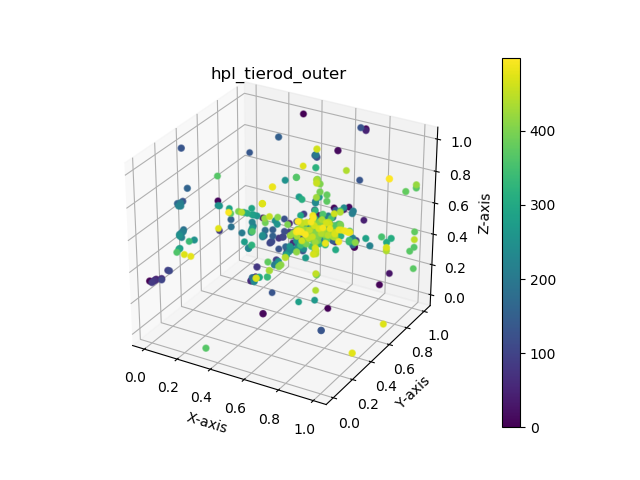}
\label{HEEDSScatterplotouter1HP}}
\hspace*{-3.1em}
\subfigure{
\includegraphics[width=4.9cm,height=4.2cm]{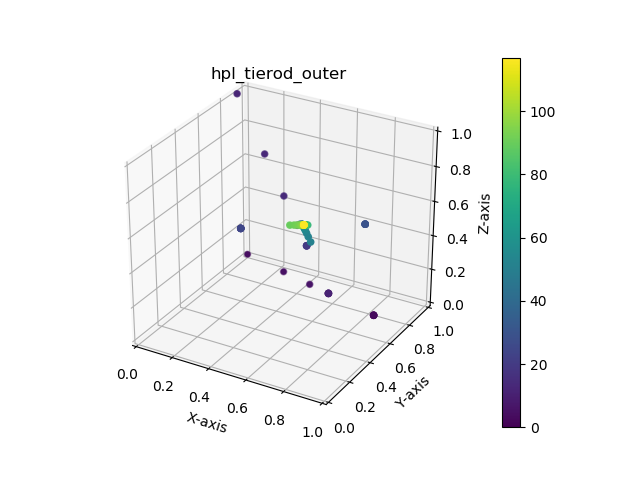}
\label{fminconScatterplotouter1HP}}
\hspace*{-3.1em}
\subfigure{
\includegraphics[width=4.9cm,height=4.2cm]{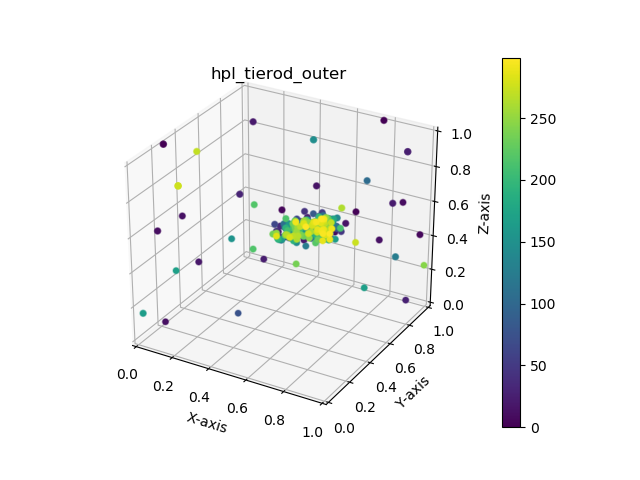}
\label{BOScatterplotouter1HP}}
\caption{Scatter plots for Outer Tie Rod Ball   in 1HP case (a) HEEDS.$\;$(b)nonlinear programming solver.$\;$(c) proposed approach.} 
\label{Scatterplotouter1HP}
\end{figure}
\begin{figure}[htp]
\hspace*{-3.1em}
\subfigure{
\includegraphics[width=4.6cm,height=4.2cm]{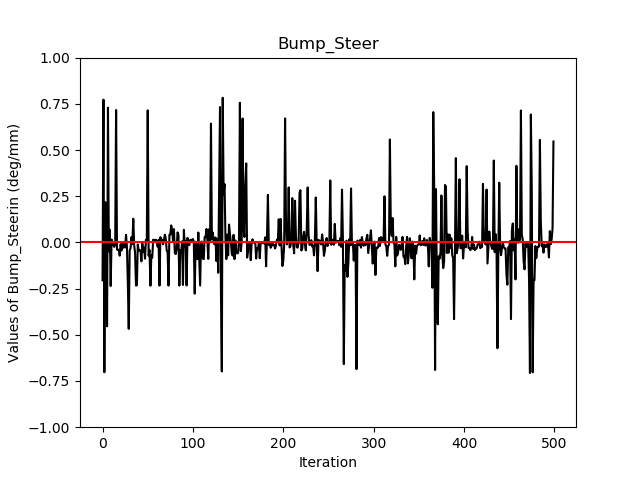}
\label{HEEDSBumpsteer1HP}}
\hspace*{-2.5em}
\subfigure{
\includegraphics[width=4.6cm,height=4.2cm]{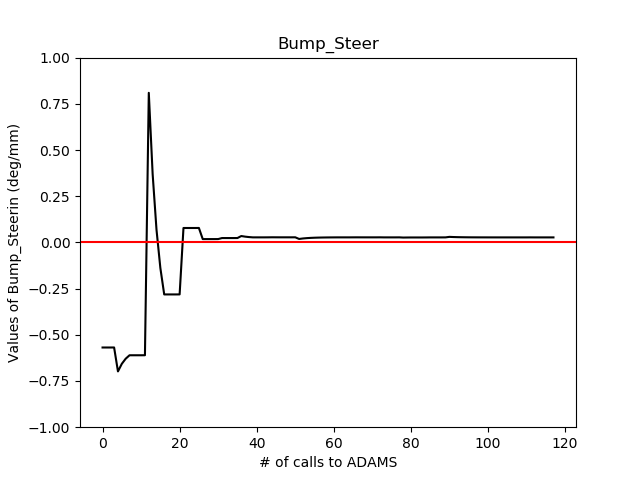}
\label{fminconBumpsteer1HP}}
\hspace*{-2.5em}
\subfigure{
\includegraphics[width=4.6cm,height=4.2cm]{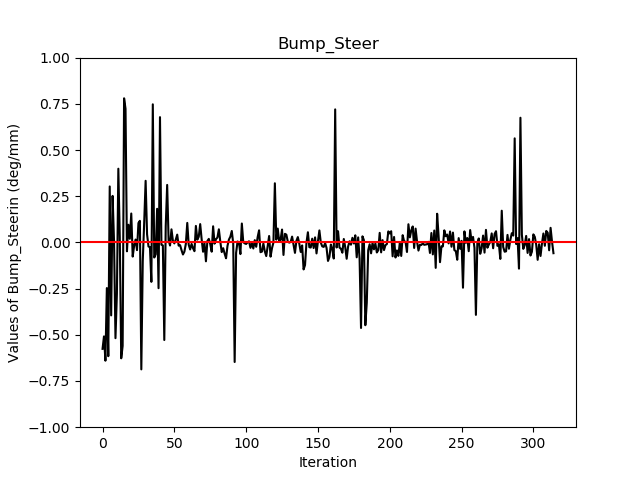}
\label{BOBumpsteer1HP}}
\caption{Performance of Bumpsteer in 1HP case (a) HEEDS.$\;$(b) nonlinear programming solver.$\;$(c) proposed approach.} 
\label{Bumpsteer1HP}
\hspace*{-3.1em}
\begin{minipage}{.5\textwidth}
  \centering
  \includegraphics[width=5.8cm,height=4.2cm]{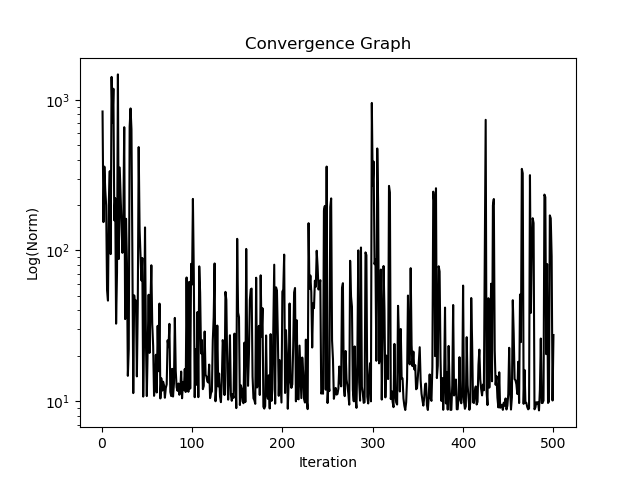}
  \captionof{figure}{Convergence (HEEDS) 2HP.}
  \label{ConvergenceHEEDS2HP}
\end{minipage}%
\begin{minipage}{.5\textwidth}
  \centering
  \includegraphics[width=5.8cm,height=4.2cm]{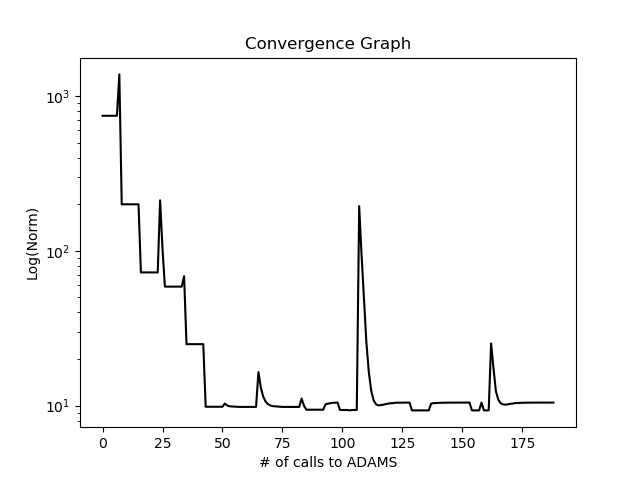}
  \captionof{figure}{Convergence (fmincon) 2HP.}
  \label{Convergencefmincon2HP}
\end{minipage}
\begin{minipage}{.5\textwidth}
  \centering
  \includegraphics[width=5.8cm,height=4.2cm]{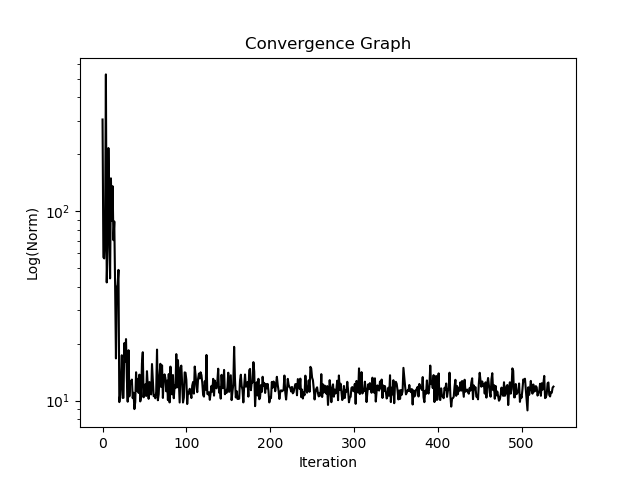}
  \captionof{figure}{Convergence (BO) 2HP.}
  \label{ConvergenceBO2HP}
\end{minipage}%
\begin{minipage}{.5\textwidth}
  \centering
  \includegraphics[width=5.8cm,height=4.2cm]{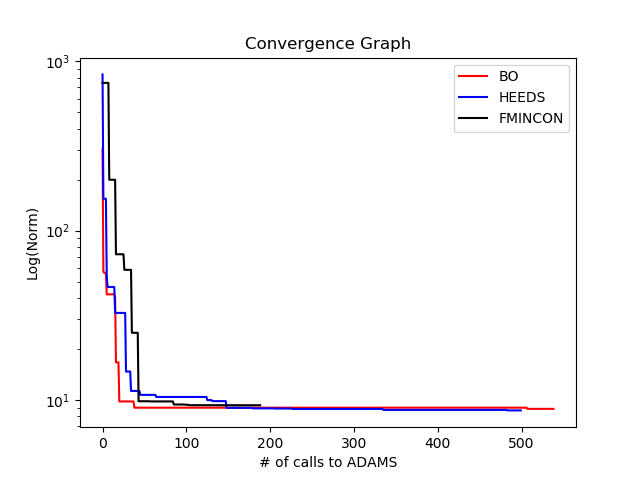}
  \captionof{figure}{Comparison of different convergence plots in 2HP case.}
  \label{Convergenceall2HP}
\end{minipage}
\end{figure}
\begin{figure}[htp]
\hspace*{-5.1em}
\subfigure{
\includegraphics[width=4.9cm,height=4.2cm]{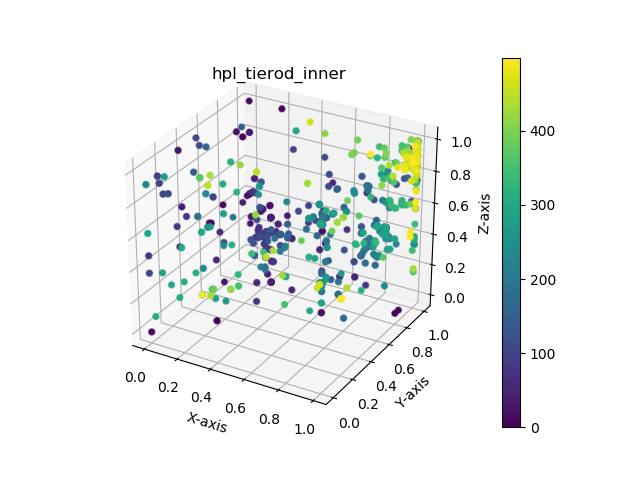}
\label{HEEDSScatterplotinner2P}}
\hspace*{-3.1em}
\subfigure{
\includegraphics[width=4.9cm,height=4.2cm]{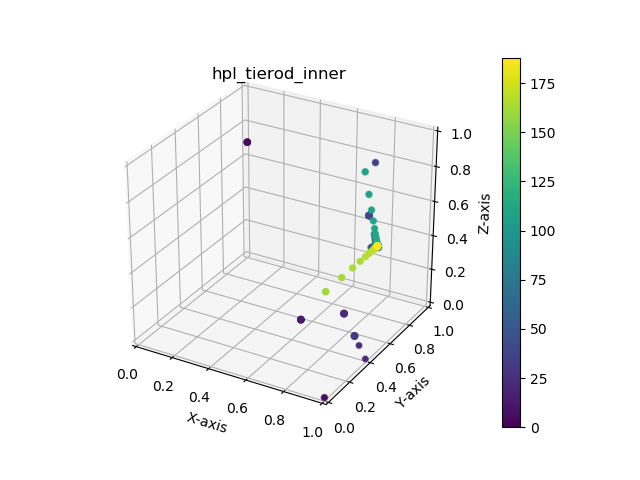}
\label{fminconScatterplotinner2HP}}
\hspace*{-3.1em}
\subfigure{
\includegraphics[width=4.9cm,height=4.2cm]{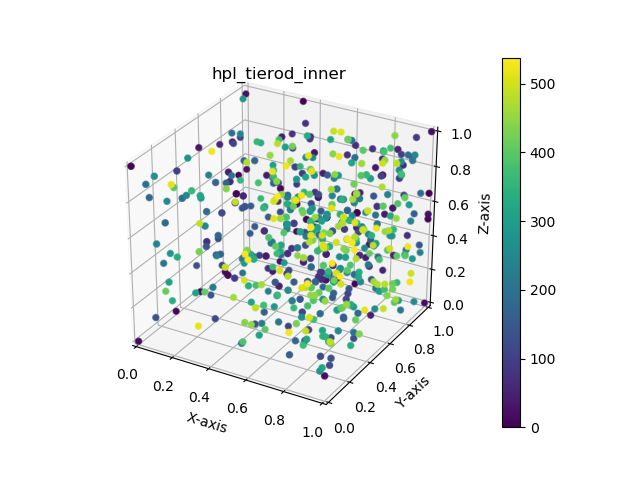}
\label{BOScatterplotinner2HP}}
\caption{Scatter plots for Inner Tie Rod Ball  in 2HP case (a) HEEDS.$\;$(b)nonlinear programming solver.$\;$(c) proposed approach.} 
\label{Scatterplotinner2HP}
\hspace*{-5.1em}
\subfigure{
\includegraphics[width=4.9cm,height=4.2cm]{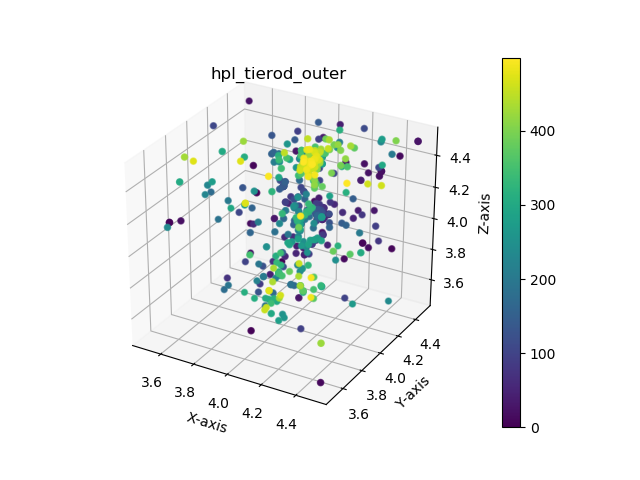}
\label{HEEDSScatterplotouter2P}}
\hspace*{-3.1em}
\subfigure{
\includegraphics[width=4.9cm,height=4.2cm]{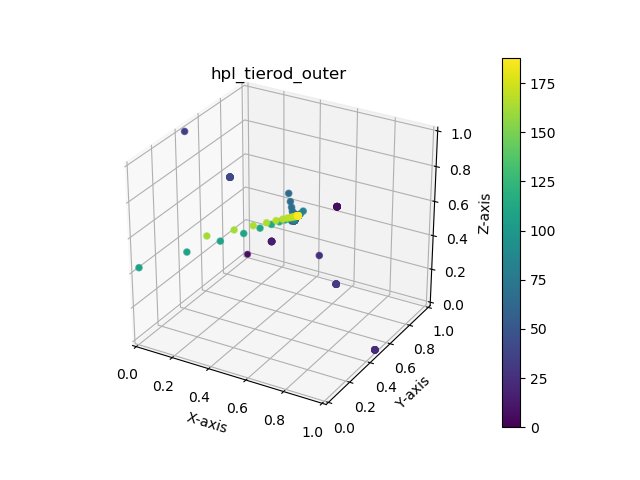}
\label{fminconScatterplotouter2HP}}
\hspace*{-3.1em}
\subfigure{
\includegraphics[width=4.9cm,height=4.2cm]{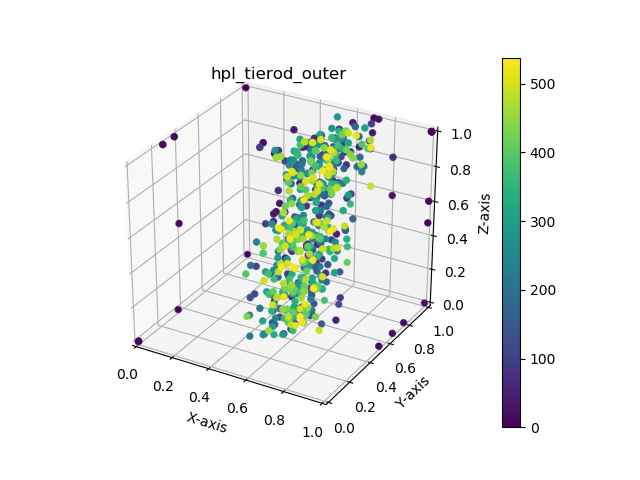}
\label{BOScatterplotouter2HP}}
\caption{Scatter plots for Outer Tie Rod Ball  in 2HP case (a) HEEDS.$\;$(b)nonlinear programming solver.$\;$(c) proposed approach.} 
\label{Scatterplotouter2HP}
\hspace*{-3.1em}
\subfigure{
\includegraphics[width=4.6cm,height=4.2cm]{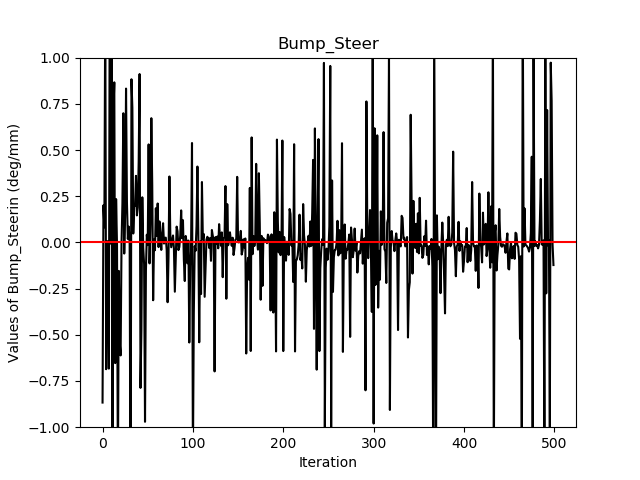}
\label{HEEDSBumpsteer2HP}}
\hspace*{-2.5em}
\subfigure{
\includegraphics[width=4.6cm,height=4.2cm]{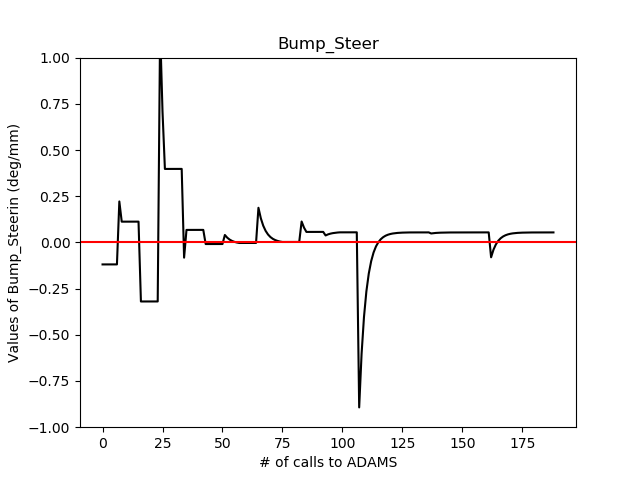}
\label{fminconBumpsteer2HP}}
\hspace*{-2.5em}
\subfigure{
\includegraphics[width=4.6cm,height=4.2cm]{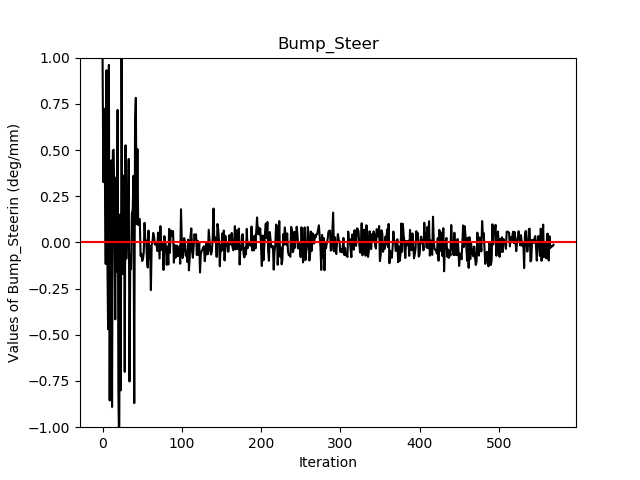}
\label{BOBumpsteer2HP}}
\caption{Performance of Bumpsteer in 2HP case (a) HEEDS.$\;$(b) nonlinear programming solver.$\;$(c) proposed approach.} 
\label{Bumpsteer2HP}
\hspace*{-3.1em}
\end{figure}

\subsection{Optimization of two hardpoints} 
The complexity in the design increases with the increase in the number of hardpoints to be optimized. We optimized the position of Inner Tie Rod Ball and Outer Tie Rod Ball in the second case-study.  The norm as a function of the iteration number for HEEDS, fmincon, and BO is shown in  Figs.~\ref{ConvergenceHEEDS2HP}, \ref{Convergencefmincon2HP}, and~\ref{ConvergenceBO2HP}, respectively. The convergence in the BO based approach is faster compared to HEEDS, as shown in Fig.~\ref{Convergenceall2HP}. The normalized scatter plots for each of the hardpoints is shown in Figs.~\ref{Scatterplotinner2HP} and~\ref{Scatterplotouter2HP}. 
Fig.~\ref{Bumpsteer2HP} shows the convergence to the target bump steer for all three methods. 
\section{Conclusion}
The performance of automotive design in terms of ride comfort and vehicle dynamics 
has a direct impact on passenger experience and customer satisfaction.
The suspension system is an important automotive system that carries the entire load of the vehicle and also responsible for a smooth ride. 
This paper primarily focuses on designing the geometry of the MacPherson front suspension system to satisfy desired kinematic performance metrics using the MSC ADAMS  Car model. 
Bayesian optimization is used to optimize the coordinate values of hard-points of the front suspension.  Conventionally, optimization of suspension design has been done using HEEDS software in the automobile industry. 
Our
results demonstrate that the proposed approach is able to optimize the parts of the suspension efficiently to enhance the comfort ride. The results of some case studies were described to highlight the performance of some commonly used algorithms with the proposed approach. It performs slightly better than the close source licensed HEEDS software commonly used in the automotive industry, 
and substantially better than 
optimization with a finite difference approximation to the gradient. Furthermore, we have two convergence criteria, based on the norm or on the acquisition function, which in contrast to HEEDS, allows us to determine when global convergence criteria are satisfied. 
\clearpage
\bibliographystyle{splncs04}
\bibliography{references}

\begin{thebibliography}{10}
\providecommand{\url}[1]{\texttt{#1}}
\providecommand{\urlprefix}{URL }
\providecommand{\doi}[1]{https://doi.org/#1}

\bibitem{AfkarJV2012}
Afkar, A., Mahmoodi-Kaleibar, M., Paykani, A.: {Geometry Optimization of Double
  Wishbone Suspension System via Genetic Algorithm for Handling Improvement}.
  Journal of Vibroengineering  \textbf{14}(2) (June 2012)

\bibitem{ben2006generalized}
Ben-Israel, A., Greville, T.N.E.: Generalized Inverses: Theory and
  Applications. CMS Books in Mathematics, Springer (2006)

\bibitem{Blundell2014}
Blundell, M., Harty, D.: The Multibody Systems Approach to Vehicle Dynamics.
  Butterworth-Heinemann (2014)

\bibitem{BrochuarXiv2010}
Brochu, E., Cora, V.M., De~Freitas, N.: {A Tutorial on Bayesian Optimization of
  Expensive Cost Functions, With Application to Active User Modeling and
  Hierarchical Reinforcement Learning}. arXiv preprint arXiv:1012.2599  (2010)

\bibitem{Bull:2011:CRE:1953048.2078198}
Bull, A.D.: {Convergence Rates of Efficient Global Optimization Algorithms}.
  Journal of Machine Learning Research  \textbf{12},  2879--2904 (2011)

\bibitem{chase2012new}
Chase, N., Sidhu, R., Averill, R.: {A New Method for Efficient Global
  Optimization of Large Systems using Sub-models: HEEDS COMPOSE Demonstrated on
  a Crash Optimization Problem}. In: LS-DYNA user forum (2012)

\bibitem{DyeIDETC2016}
Dye, J., Lankarani, H.: {Hybrid Simulation of a Dynamic Multibody Vehicle
  Suspension using Neural Network Modeling Fit of Tire Data}. In: Proc. {ASME}
  $42^{th}$ International Design Engineering Technical Conferences and
  Computers and Information in Engineering Conference, North Carolina. pp.~1--4
  (August 2016)

\bibitem{DBLP:journals/corr/abs-1807-02811}
Frazier, P.I.: {A Tutorial on Bayesian Optimization}. CoRR
  \textbf{abs/1807.02811} (2018)

\bibitem{HurelAMC2012}
Hurel, J., Mandow, A., Garc\'{i}a-Cerezo, A.: {Nonlinear Two-Dimensional
  Modeling of a MacPherson Suspension for Kinematics and Dynamics Simulation}.
  In: Proc. {IEEE} $12^{th}$ International Workshop on Advanced Motion Control,
  Bosnia. pp. 1 --6 (March 2012)

\bibitem{LiuRJASET2013}
Liu, X., Luo, J., Wang, Y., Guo, H., Wang, X.: {Analysis for Suspension
  Hardpoint of Formula SAE Car Based on Correlation Theory}. Research Journal
  of Applied Sciences, Engineering and Technology  \textbf{6}(24),  4569--4574
  (12 2013)

\bibitem{locci2019acoustic}
Locci, C., Matas, E., Oberhumer, K.: {Acoustic Optimization of a Muffler
  through the Sherpa Algorithm}. Tech. rep., SAE Technical Paper (2019)

\bibitem{lorenz2015stopping}
Lorenz, R., Monti, R.P., Violante, I.R., Faisal, A.A., Anagnostopoulos, C.,
  Leech, R., Montana, G.: {Stopping Criteria for Boosting Automatic
  Experimental Design using Real-time fMRI with Bayesian Optimization}. arXiv
  preprint arXiv:1511.07827  (2015)

\bibitem{Ma2019a}
Ma, X., Rannen~Triki, A., Berman, M., Sagonas, C., Cali, J., Blaschko, M.B.: {A
  Bayesian Optimization Framework for Neural Network Compression}. In:
  Proceedings of the International Conference on Computer Vision (ICCV) (2019)

\bibitem{macpherson1953vehicle}
Macpherson, E.S.: {Vehicle Wheel Suspension System} (1953), {US} Patent
  2,624,592

\bibitem{macpherson1953wheel}
Macpherson, E.S.: {Wheel Suspension for Motor Vehicles} (1953), {US} Patent
  2,660,449

\bibitem{Mockus:1974:BMS:646296.687872}
Mockus, J.: {On Bayesian Methods for Seeking the Extremum}. In: Proc. IFIP
  Technical Conference. pp. 400--404. Springer-Verlag (1974)

\bibitem{MSC-ADAMS}
{MSC Software}: {ADAMS: Multibody Dynamics Simulation Software}.
  \url{https://www.mscsoftware.com/product/adams} (2014)

\bibitem{pmlr-v77-nguyen17a}
Nguyen, V., Gupta, S., Rana, S., Li, C., Venkatesh, S.: {Regret for Expected
  Improvement over the Best-Observed Value and Stopping Condition}. In: Proc.
  $9^{th}$ Asian Conference on Machine Learning. vol.~77, pp. 279--294 (Nov
  2017)

\bibitem{NoearXiv2018}
No{\`e}, U., Husmeier, D.: {On a New Improvement-Based Acquisition Function for
  Bayesian Optimization}. arXiv preprint arXiv:1808.06918  (2018)

\bibitem{QianICICA2012}
Qian, L., Shi, Q.: {Optimization of Wheel Positioning Parameters of Automotive
  Front Suspension Based on ADAMS}. In: Proc. International Conference on
  Information Computing and Applications. pp. 821--827. Springer (September
  2012)

\bibitem{Rasmussen:2005:GPM:1162254}
Rasmussen, C.E., Williams, C.K.I.: {Gaussian Processes for Machine Learning}.
  MIT Press (2005)

\bibitem{PradeepSDSU}
Singh, P.: {Bayesian Optimization for Machine Learning}. Tech. rep., San Diego
  State University (2018)

\bibitem{Snoek2012NIPS}
Snoek, J., Larochelle, H., Adams, R.P.: {Practical Bayesian Optimization of
  Machine Learning Algorithms}. In: Advances in Neural Information Processing
  Systems. pp. 2951--2959 (2012)

\bibitem{SrinivasarXiv2009}
Srinivas, N., Krause, A., Kakade, S., Seeger, M.: {Gaussian Process
  Optimization in the Bandit Setting: No Regret and Experimental Design}. In:
  Proceedings of the 27th International Conference on International Conference
  on Machine Learning. pp. 1015--1022 (2010)

\bibitem{TaoIDETC2017}
Tao, S., Shintani, K., Bostanabad, R., Chan, Y.C., Yang, G., Meingast, H.,
  Chen, W.: {Enhanced Gaussian Process Metamodeling and Collaborative
  Optimization for Vehicle Suspension Design Optimization}. In: Proc. {ASME}
  $43^{rd}$ International Design Engineering Technical Conferences and
  Computers and Information in Engineering Conference, Ohio. pp. 1--12 (August
  2017)

\bibitem{gpyopt2016}
{The GPyOpt authors}: {GPyOpt: A Bayesian Optimization Framework in Python}.
  \url{http://github.com/SheffieldML/GPyOpt} (2016)

\bibitem{Vito2005JML}
Vito, E.D., Rosasco, L., Caponnetto, A., Giovannini, U.D., Odone, F.: {Learning
  from Examples as an Inverse Problem}. Journal of Machine Learning Research
  \textbf{6}(May),  883--904 (2005)

\bibitem{Yang1999AML}
Yang, C., Zhang, J.: {Two General Methods for Inverse Optimization Problems}.
  Applied Mathematics Letters  \textbf{12}(2),  69--72 (1999)

\bibitem{YaoAMM2013}
Yao, X.J., Xu, C.Y., Ma, T.Q., Zhang, X.T.: {MacPherson Suspension Simulation
  Analysis Method Based on ADAMS}. In: Materials, Mechanical Engineering and
  Manufacture. Applied Mechanics and Materials, vol.~268, pp. 860--865. Trans
  Tech Publications (3 2013)

\bibitem{Ye2019MA}
Ye, N., Roosta-Khorasani, F., Cui, T.: {Optimization Methods for Inverse
  Problems}, pp. 121--140. Springer International Publishing (2019)

\bibitem{YuPSU2003}
Yu, H., Yu, N.: {Application of Genetic Algorithms to Vehicle Suspension
  Design}. Tech. rep., The Pennsylvania State University, University park
  (2003)

\bibitem{ZhangESDA2006}
Zhang, J., Sun, Y., Gao, R.: {Optimal Design of Suspension Parameters of
  Flexible Multibody Vehicle Model Based on ADAMS Software and Improved Genetic
  Algorithms}. In: Proc. {ASME} $8^{th}$ Biennial Conference on Engineering
  Proceedings Systems Design and Analysis, Italy. pp.~1--6 (July 2006)

\bibitem{ZhangICVMEE2016}
Zhang, J., Pei, J., Zhang, J.: {The Optimization Study of the Front Suspension
  Structure Parameters for the Mine Truck Based on ADAMS/CAR}. In: Proc.
  International Conference on Vehicle, Mechanical and Electrical Engineering.
  pp.~1--9 (July 2016)

\end{thebibliography}

\end{document}